\tikzset{decorate sep/.style 2 args=
 {decorate,decoration={shape backgrounds,shape=circle,shape size=#1,shape sep=#2}}}
\algnewcommand{\algorithmicgoto}{\textbf{go to}}%
\algnewcommand{\Goto}[1]{\algorithmicgoto~\ref{#1}}%
\algnewcommand{\algoand}{\textbf{and }}
\algnewcommand{\algoor}{\textbf{or }}
\newtheorem{prop}{Proposition}
\newtheorem{remark}{Remark}
\newtheorem{lemma}{Lemma}
\title{Vector Field Guidance for Convoy Monitoring Using Elliptical Orbits }
\author{Aseem V. Borkar$^{1}$,
		Vivek S. Borkar$^{2}$	
		and Arpita Sinha$^{3}$
\thanks{$^{1}$Aseem V. Borkar and $^{3}$Arpita Sinha are with the Interdisciplinary Program in Systems and Control Engineering, IIT Bombay, India, {\tt\small aseem@sc.iitb.ac.in, asinha@sc.iitb.ac.in}}
\thanks{$^{2}$Vivek S. Borkar is with Department of Electrical Engineering, IIT Bombay, India., {\tt\small borkar@ee.iitb.ac.in}}}
\begin{document}

\maketitle

\thispagestyle{empty}
\pagestyle{empty}

\begin{abstract}
We propose a novel vector field based guidance scheme for tracking and surveillance of a convoy, moving along a possibly nonlinear trajectory on the ground, by an aerial agent. The scheme first computes a time varying ellipse  that encompasses all the targets in the convoy using a simple regression based algorithm. It then ensures convergence of the agent  to a trajectory that repeatedly traverses this moving ellipse. The scheme is analyzed using perturbation theory of nonlinear differential equations and supporting simulations are provided. Some related implementation issues are discussed and advantages of the scheme are highlighted.
\end{abstract}

\section{Introduction}
We consider the problem of effectively protecting/surveillance of a slowly moving convoy of targets with aerial agents such as UAVs or quadrotors. For the purposes of this work, we consider a single monitoring agent modelled by unicycle kinematics, and develop a guidance law whereby it latches on to a slowly moving ellipse that encircles the convoy moving along a possibly nonlinear trajectory and moves with it. A convoy is a group of targets trailing one after the other while  moving on the same path on the ground. Elliptical orbits are more economical in terms of distance travelled for monitoring such targets, as compared to circular orbits which have been more common in  earlier works. Thus the problem is twofold: to compute the moving ellipse around the convoy and to follow a trajectory that remains faithful to traversal of this elliptical orbit as the convoy moves along a possibly nonlinear but smooth trajectory. We assume that the aerial agent moves much faster than the convoy targets. We then leverage the assumed time scale separation between the motions of the target(s) and the aerial agent to exploit some facts from the perturbation theory of differential equations.

We briefly recall here some related works, the reader is referred to \cite{targetsurvey} for an extended survey. There is a lot of literature on guidance strategies to follow circular orbits around stationary and moving targets. The closest in spirit to our work are the guidance laws based on appropriately designing the relevant vector fields for target tracking applications. For example, vector field based guidance laws have been used for tracking of a single target \cite{frew_circle_standoff}  or close groups of targets \cite{tsourdos_journal} with multiple UAVs while maintaining a minimum stand-off distance from the targets. In both cases the tracking UAVs achieve a phase separation on the circular orbit by controling the linear velocity with a phase error term.
\cite{frew_racetrack} extends the work in  \cite{frew_circle_standoff} to a race-track like path for tracking a convoy moving on a straight line.
\cite{frew_ellipse} transforms the guidance vector field developed in \cite{frew_circle_standoff} to track fixed elliptical orbits whose parameters depend on estimation uncertainties of the target states for a target moving in a straight line. Vector field based guidance laws for tracking circular orbits have various applications other than target tracking, e.g., atmospheric sensing \cite{vfield_cylinder}, path following  \cite{beard_journal}. Cyclic pursuit is also a popular approach where multiple agents  cooperatively converge  to a circular orbit with an equi-spaced formation, for both stationary \cite{galloway_cyc_pursuit} and moving  \cite{ma_moving_cyc_pursuit} targets. \cite{ma_moving_var_rad} extends the latter for circular orbits of varying radius around the target.

Another aspect of using circular orbits for target monitoring is formation control. \cite{zhang_formation} proposes  agent formation strategies where the agents follow circular orbits centered at target to loiter around slow moving targets and move with a fixed equi-spaced formation on the orbit when the target moves faster. \cite{leonard_formation} proposes steering  control  laws  for cooperating agents to perform and transition between two stably controlled group
motions: parallel  motion  and  circular  motion. It is also shown that this method can be used to track a point moving on a piecewise linear path.

For the problem of convoy protection one approach is to track lemniscate like orbits. \cite{oliveira_lemniscate} propose a strategy where a single UAV tracks lemniscate like orbits centered at the convoy centroid. \cite{spry_lemniscate} uses a combination of lemniscate like lateral orbits and  parameterised asymmetric longitudinal orbits to follow a convoy moving on a straight line. \cite{magnus_dubins} treats the UAVs as
Dubins vehicles and designs time-optimal paths for convoy protection of stationary ground vehicles. They propose control strategy to use these paths to monitor a convoy moving in a straight line. Unlike these strategies, our proposed strategy can easily be adapted for tracking of a convoy while maintaining a minimum stand-off distance as discussed in later sections.

The paper is organized as follows:  Section \ref{sec_algo} describes our choice of the elliptical orbit for convoy encirclement. Section \ref{sec_guidance} describes the vector field guidance strategy to guide the agent to this orbit. Section \ref{sec_sim_res} validates the guidance law and the convoy encirclement strategy through MATLAB simulations. The Appendix details some technical results used in the main text.


\section{Convoy Encirclement using Elliptical Orbits}
\label{sec_algo}
 The proposed encirclement strategy discussed in this section aims to continuously encircle all the targets in the convoy as they move along some path. For this work we assume that the positions of the targets constituting the convoy are always known to the monitoring agent either through sensing or through cooperation. The encirclement strategy is implemented by  an algorithm that runs in each iteration of the guidance loop, and defines an ellipse around the  positions of the targets  at each instant of time. It is assumed that the speed of the tracking agent is $V_A\in[V_{A_{min}},\ V_{A_{max}}]$ and the speed  $V_i$ of the target $i$ is bounded above by $V_{T_{max}}$ where  $V_{T_{max}}<<V_{A_{min}}$. We follow the convention that the targets are numbered $1,...,N$ along the direction of travel for the convoy, i.e. the leading agent in the convoy is agent $N$.
We denote the set of real numbers as $\Bbb{R}$ and  the rotation matrix from the right handed global reference frame to a tilted frame with tilt angle $\theta(t)$  as
\begin{align}R_{\theta}(t)=\left[\begin{matrix}
\cos(\theta(t)) &  \sin(\theta(t))\\
  -\sin(\theta(t)) & \cos(\theta(t))
\end{matrix}\right].
\label{eqn_rot_mat}
\end{align}
  The algorithm fits a linear regression line to the target positions to define a bounding rectangle $l_1(t)\times l_2(t)$ that contains all target points either inside or on it as illustrated in Fig \ref{fig_local_frame_geometry}.
 \begin{figure}[!h]
\centering
\includegraphics[width=1\linewidth]{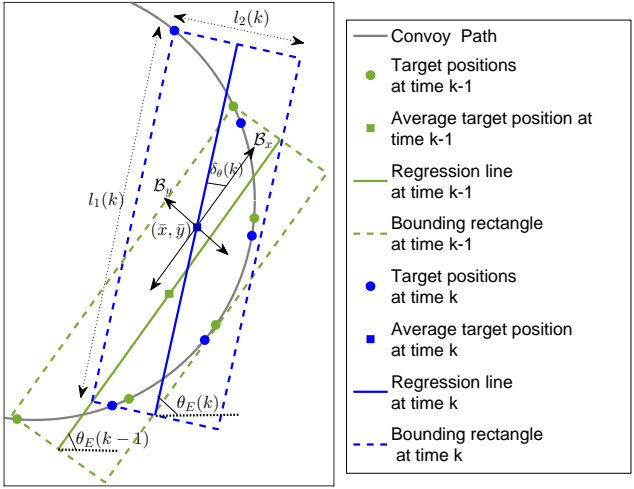}\\
\caption{Successive regression relative to the local frame $\mathcal{B}$}
\label{fig_local_frame_geometry}
\end{figure}
To encompass all the targets we consider ellipses that either contain or circumscribe this rectangle as potential paths for the monitoring agents to follow.

The  equations for computing slope $m$ and intercept $c$ of a regression line $y=mx+c$  fit to data points $(x_i,y_i)$ are:
\begin{align}
  c=\frac{\bar{y}\sum\limits_{i=1}^N x_{i}^2-\bar{x}\sum\limits_{i=1}^N x_{i}y_{i}}{\sum\limits_{i=1}^N x^2_{i}-N\bar{x}^2},\         m=\frac{\sum\limits_{i=1}^N x_{i}y_{i}-N\bar{x}\bar{y}}{\sum\limits_{i=1}^N x^2_{i}-N\bar{x}^2}
  \label{eqn_reg_params}
\end{align}
 with $\bar{x}=\frac{\sum\limits_{i=1}^N x_{i}}{N}$ and $\bar{y}=\frac{\sum\limits_{i=1}^N y_{i}}{N}$,
where $(x_i,y_i)$ represent the positions of convoy target $i$ in the global reference frame.

  \begin{figure}[!h]
\centering
\includegraphics[width=1\linewidth]{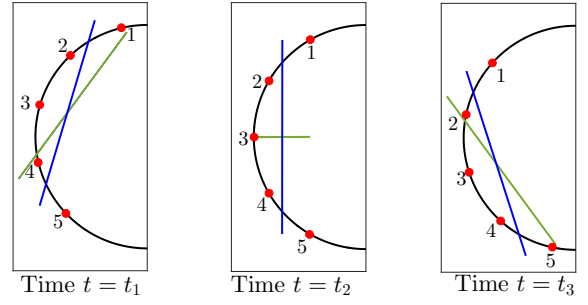}\\
\caption{The red points shown are the targets moving on the black path at three instants of time $t_1<t_2<t_3$. The green line segment is the regression line relative to global frame. The blue line segment is the result of Algorithm \ref{algo_CCR}}
\label{fig_regression_issue}
\end{figure}

\begin{remark} An issue with linear regression is that when fitting line $y=mx+c$ to the data points stacked close to the $y$ axis, linear regression yields a nearly horizontal line with large projection errors.
Thus if the regression line is fit relative to a fixed inertial frame, as when the convoy moves on a path in  the vicinity of the inertial $y$ axis, a sudden change  in inclination angle $\theta_E$ of the the regression line occurs as shown in Fig. \ref{fig_regression_issue} which is not desirable, because a segment of this line is used later to define the major axis of the encircling ellipse. We describe below a way around.
\label{rem_reg_issue}
\end{remark}

 Algorithm \ref{algo_CCR}  initialises the $\theta_E(k)$ as follows:  If the agents lie on a vertical line in the global reference frame,  then the  numerator and denominator for  regression line slope $m$ are both zero and the algorithm sets  $\theta_E(0)=\frac{\pi}{2}$. Algorithm \ref{algo_projection}   computes $l_1(0)$ as length of the line segment  joining the first and last projections of the targets on the regression line, and $l_2(0)$ as twice the maximum normal projection error $d_{max}$ from the target positions to the regression line (see Fig. \ref{fig_local_frame_geometry}). $l_2(0)>l_1(0)$ implies large projection error and regression angle is set to $\theta_E(0)=\frac{\pi}{2}-\theta_E(k)$.

To avoid the problem discussed in Remark \ref{rem_reg_issue} in the subsequent iterations of the guidance loop (indexed by iteration count $k$), Algorithm \ref{algo_CCR} selects a convoy-centric coordinate frame $\mathcal{B}(k)$ centered at the average target position $(\bar{x},\ \bar{y})$ with the $x$ axis aligned along $\theta_E(k-1)$, and a regression line $y=m'x$ (solid blue line segment in Fig. \ref{fig_local_frame_geometry}) is fit to the agent positions at the instant $k$ relative to the frame $\mathcal{B}(k)$. This regression line always passes through the origin of the coordinate frame $\mathcal{B}(k)$,  as the average value  $(\bar{x},\ \bar{y})$ of the data points always lies on the regression line by virtue of \eqref{eqn_reg_params}. The change in tilt angle $\delta_\theta(k)\in \left[\frac{-\pi}{2},\ \frac{\pi}{2}\right]$ for each iteration is computed as
$$\delta_\theta(k)=\arctan(m'(k)), \ \theta_E(k)=\theta_E(k-1) \ + \ \delta_\theta(k).$$
Algorithm \ref{algo_projection}  recomputes $\theta_E(k)$ as the inclination angle of the ray joining $(\bar{x},\bar{y})$ to the projected position $(x_{p_N},y_{p_N})$ of the target $N$ on this regression line. As a result, $\theta_E(k)\in (-\pi, \pi]$.

 Since the motion of the agents is continuous, slope $m'$ of the regression line  in the frame   $\mathcal{B}(k)$ also varies continuously with time $k$.
 As a result,  $\delta_\theta(k)$ in the local reference frame $\mathcal{B}(k)$ is  a small angle with respect to the local $x$ axis and the  issue discussed in Remark \ref{rem_reg_issue} is avoided.
$l_1(k), l_2(k)$ are computed in the same manner as in the initialisation step using Algorithm \ref{algo_projection}.

\begin{algorithm}
\caption{\textproc{Convoy$\_$centric$\_$regression }}
\label{algo_CCR}
\begin{algorithmic}[1]
{\small
\Statex {\bf Inputs: } $(x_i(k),y_i(k))\ \forall i\in\{1,...,N\}$, $k$
\Statex {\bf Functions: } \textproc{Projection}
\Statex {\bf Outputs: } $l_1(k)$, $l_2(k)$, $\theta_E(k),x_o(k),y_o(k)$
\State $(\bar{x},\bar{y})=\frac{1}{N}\left(\sum\limits_{i=1}^N x_{i}(k),\sum\limits_{i=1}^N y_{i}(k)\right)$
\State $x_{arr}=\lbrace x_i(k),\forall i\in\{1,...,N\}\rbrace$
\State $y_{arr}=\lbrace y_i(k),\forall i\in\{1,...,N\}\rbrace$
\If {$k=0$}
\State $m_{n}=\sum\limits_{i=1}^N x_{i}(k)y_{i}(k)-N\bar{x}\bar{y},\ m_d=\sum\limits_{i=1}^N x^2_{i}(k)-N\bar{x}^2$
\If {$m_n=0$ and $m_d=0$}
\State $\theta_E(k)=\frac{\pi}{2}$
\Else  $\ m=\frac{m_n}{m_d}$,
\State $\theta_E(k)=\arctan(m)$
\EndIf
\State [$l_1(k),l_2(k),x_o(k),y_o(k),\theta_E(k)$]= \Statex\hspace{2cm}\textproc{Projection}$\left(x_{arr},y_{arr}, \bar{x},\bar{y},\theta_E(k) \right)$
\If {$l_1<l_2$}
        $\theta_E(k)=\frac{\pi}{2}-\theta_E(k)$
\EndIf
\Else
\For {$i\in\{1,...,N\}$}

\State $\left[\begin{matrix}x^{\mathcal{B}}_i(k) \\ y^{\mathcal{B}}_i(k)  \end{matrix}\right]=R_{\theta_E}(k-1)\left[\begin{matrix}x_i(k)-\bar{x} \\ y_i(k)-\bar{y}  \end{matrix}\right]$
 \EndFor
\State $m'_{n}=\sum\limits_{i=1}^N x^{\mathcal{B}}_{i}(k)y^{\mathcal{B}}_{i}(k)$, $\ m'_d=\sum\limits_{i=1}^N {x^{\mathcal{B}}_{i}}^2(k)$, $m'=\frac{m'_n}{m'_d}$
\State $\delta_{\theta}(k)=\arctan(m')$
\State $\theta_E(k)=\theta_E(k-1)+\delta_{\theta}(k)$
\State [$l_1(k),l_2(k),x_o(k),y_o(k),\theta_E(k)$]=
\Statex\hspace{2cm} \textproc{Projection}$\left(x_{arr},y_{arr}, \bar{x},\bar{y},\theta_E(k) \right)$
\EndIf}
\end{algorithmic}
\end{algorithm}

\begin{algorithm}
\caption{\textproc{Projection}}
\label{algo_projection}
\begin{algorithmic}[1]
\small
\Statex {\bf Inputs:} $x_{arr},y_{arr},\bar{x},\bar{y},\theta_E(k)$
\Statex {\bf Outputs:} $l_1(k),l_2(k),x_o(k),y_o(k),\theta_E(k)$
\State $x_{min}=0,x_{max}=0$, $d_{max}=0$
\For {$i\in \{1,...,\vert x_{arr}\vert \}$}
\State $\left[\begin{matrix}x_r\\ y_r \end{matrix}\right]=R_{\theta_E}(k)\left[\begin{matrix} x_{arr}[i]-\bar{x} \\ y_{arr}[i]-\bar{y} \end{matrix}\right]$
\If{$d_{max}\leq \vert y_r \vert $}
\State $d_{max}=\vert y_r \vert$
\EndIf
\If{$x_{min}\geq x_r$}
\State $x_{min}=x_r$
\EndIf
\If{$x_{max}\leq x_r$}
\State $x_{max}=x_r$
\EndIf
\If{$i=N$}
\State $x^\mathcal{B}_N=x_r$
\EndIf
\EndFor
\State$\left[\begin{matrix}x_{p_{min}}\\ y_{p_{min}} \end{matrix}\right]=R_{\theta_E}^{-1}(k)\left[\begin{matrix} x_{min} \\ 0 \end{matrix}\right]+\left[\begin{matrix} \bar{x} \\ \bar{y} \end{matrix}\right]$
\State$\left[\begin{matrix}x_{p_{max}}\\ y_{p_{max}} \end{matrix}\right]=R_{\theta_E}^{-1}(k)\left[\begin{matrix} x_{max} \\ 0 \end{matrix}\right]+\left[\begin{matrix} \bar{x} \\ \bar{y} \end{matrix}\right]$
\State$\left[\begin{matrix}x_{p_N}\\ y_{p_N} \end{matrix}\right]=R_{\theta_E}^{-1}(k)\left[\begin{matrix} x^\mathcal{B}_N \\ 0 \end{matrix}\right]+\left[\begin{matrix} \bar{x} \\ \bar{y} \end{matrix}\right]$

\State $x_o(k)=\frac{x_{p_{min}}+x_{p_{max}}}{2}$ \State $y_o(k)=\frac{y_{p_{min}}+y_{p_{max}}}{2}$
 \State $l_1(k)=\sqrt{(x_{p_{min}}-x_{p_{max}})^2+(y_{p_{min}}-y_{p_{max}})^2}$
\State $l_2(k)=2d_{max}$
\State $\theta_E(k)=\arctan2(y_{p_N}-\bar{y},x_{p_N}-\bar{x})$

\end{algorithmic}

\end{algorithm}

From  Algorithm \ref{algo_CCR} we have the tilt angle
$\theta_E(k)$ of the major axis relative to the global reference frame.
The lengths of the semi-major and minor axes can be selected according to the objective of the monitoring mission.  Henceforth we consider the minimum area ellipse circumscribing the bounding rectangle around the convoy. By Lemma \ref{lem_min_A} of the Appendix, the minimum area ellipse $\mathcal{E}: \frac{x^2}{a^2}+\frac{y^2}{b^2}=1$ with $a>b>0$ that circumscribes the bounding rectangle around the convoy has semi-major axes $a=\frac
{l_1}{\sqrt{2}}$ and semi-minor axes $b=\frac
{l_2}{\sqrt{2}}$ (shown as the blue ellipse  in Fig. \ref{fig_ellipse_geometry}).
 Some monitoring applications such as \cite{frew_circle_standoff} may require that the monitoring agent must maintain a minimum stand-off distance $d_s$ from the targets. To ensure this, instead of the $l_1\times\l_2$ rectangle, the $l_{1s}\times\l_{2s}$ rectangle centered at $(x_o,y_o)$ is considered with $l_{1s}=l_1+2d_s$ and $l_{2s}=l_2+2d_s$, whereby the same algorithm  guarantees a minimum stand off distance of the elliptical path from the convoy agents (shown as the black ellipse in Fig. \ref{fig_ellipse_geometry}).
 \begin{figure}[!h]
\centering
\includegraphics[width=1\linewidth]{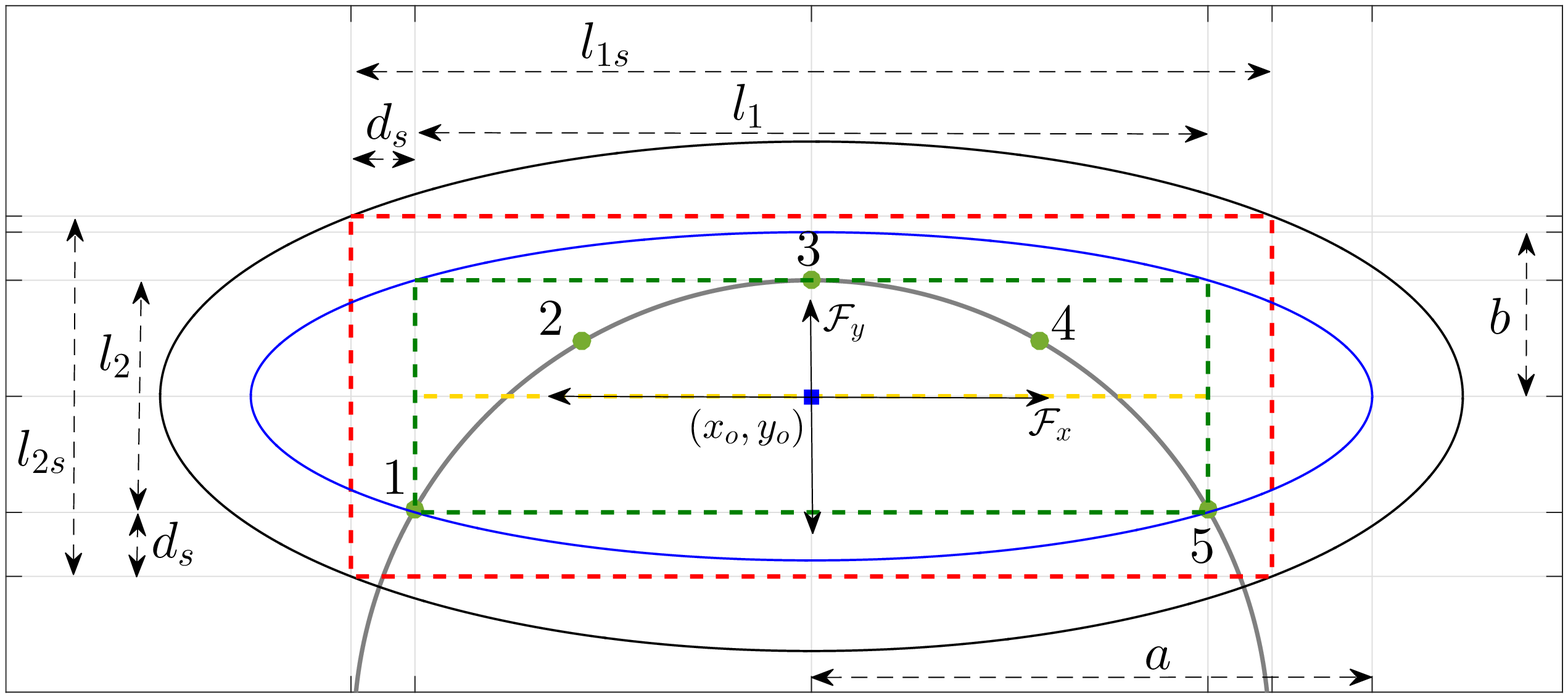}\\
\caption{Ellipse centered frame $\mathcal{F}$ with origin $(x_o,y_o)$. The yellow dashed line is the regression segment $l_1$. The blue ellipse is the minimum area ellipse circumscribing the bounding rectangle $l_1\times l_2$ (shown with green dashed line). The black ellipse circumscribing the red dashed rectangle guarantees a minimum stand off distance $d_s$ from all the targets in the convoy.}
\label{fig_ellipse_geometry}
\end{figure}

Most agents in practice have an upper bound $\omega_{max}$ on their angular speed. Assume that they are described by the unicycle model with state space representation
\begin{align}
\dot{x}_{A}(t)=V_A\cos(\psi_{A}(t)) &,\
\dot{y}_{A}(t)=V_A\sin(\psi_{A}(t)),\nonumber\\
\dot{\psi}_{A}(t)&=\omega_{A}(t)
\label{eqn_unicycle}
\end{align}
where $(x_{A}(t),y_{A}(t))$ are the agent position coordiantes, $\psi_{A}(t)$ the heading angle, $V_A\in[V_{A_{min}},\ V_{A_{max}}]$ is the commanded constant linear speed, and $\omega_{A}(t)$  the commanded angular velocity of  the monitoring agent satisfying $\vert\omega_{A}(t)\vert \leq \omega_{max} $. By Lemma \ref{lem_min_R} of Appendix, minimum radius of curvature of the ellipse $\mathcal{E}$ is $\mathcal{R}_{min}=\frac{b^2}{a}$. To ensure that   $\omega_{A}<\omega_{max}$ while following the elliptical path, $\mathcal{R}_{min}$ and the minimum turn radius of the agent $\mathcal{R}_{A}=\frac{V_{{A}_{max}}}{\omega_{max}}$ must satisfy $\mathcal{R}_{min}\geq \mathcal{R}_A$, which implies $a\geq\frac{V_{{A}_{max}}}{\omega_{max}}$ and $\ b\geq\sqrt{\frac{aV_{{A}_{max}}}{\omega_{max}}}$.
Assuming that $\delta_{\theta_E}$ is small, the maximum relative velocity between the target and the agent is $V_{R_{max}}=V_{A_{max}}+V_{T_{max}}$. Thus we select
\begin{align}
a(k)&=\max\left\lbrace\frac{l_1(k)}{\sqrt{2}},\frac{V_{R_{max}}}{\omega_{max}}\right\rbrace , \nonumber\\ b(k)&=\max\left\lbrace \frac{l_2(k)}{\sqrt{2}},\ \sqrt{\frac{a(k)V_{R_{max}}}{\omega_{max}}}\right\rbrace ,
\label{eqn_a_b_choice}
\end{align} which ensures that the circumscribing elliptical orbit of minimum area is selected as long as it doesn't violate the minimum turn radius or minimum speed of the monitoring agent on the ellipse.


\section{Guidance Strategy}
\label{sec_guidance}
Since the proposed algorithm assigns an ellipse around the convoy of interest, a guidance strategy is essential for tracking this elliptical path around the convoy. This strategy must be able to guide the monitoring agent from any initial pose to any ellipse of interest defined in 2D space. To simplify analysis it is assumed that the speed of the monitoring tracking agent is  greater than the convoy's maximum speed. Also, the monitoring agent is characterized by a unicycle kinematic model described in \eqref{eqn_unicycle}. For the case of an ellipse having its axes aligned along the 2D reference frame and centered at the origin, the equation of the ellipse is given by $\frac{x^2}{a^2}+\frac{y^2}{b^2}=1$. Differentiating this equation with respect to the $x$ coordinate, the tangential direction at a point $(x,y)$ on the ellipse in the counter-clockwise direction is given by
\begin{align}
\psi_T= \arctan2\left(dy,dx\right)=\arctan2\left(b^2x,-a^2y\right)
\label{eqn_chi_T}
\end{align}
where the term $\arctan2$ yields $\psi_T\in(-\pi,\ \pi]$. This function is undefined  at $(0,0)$ where we set it equal to zero.

 Consider the family of concentric ellipses $\frac{x^2}{a^2}+\frac{y^2}{b^2}=c$ with $c>0$. Any point $(x_p,y_p)\in \Bbb{R}^2$ lies on a unique ellipse $\frac{x^2}{a^2}+\frac{y^2}{b^2}=c_p$ from this family, where $c_p=\frac{x_p^2}{a^2}+\frac{y_p^2}{b^2}$ and $\psi_T\vert_{(x_p,y_p)}$ gives the tangential heading angle along this ellipse as shown in  Fig. \ref{fig_ellipse}. All points in $\Bbb{R}^2$ lying on the a line $y=mx$ for any slope $m$ result in the same value of   $\psi_T$ .

 Suppose the desired ellipse to be followed is  $\frac{x^2}{a^2}+\frac{y^2}{b^2}=1$ with $a>b$ and the agent position coordinates are $(x_{A}(t),\ y_{A}(t))$. Define $\gamma(t)=\frac{x_{A}^2(t)}{a^2}+\frac{y_{A}^2(t)}{b^2}$. The desired heading $\psi_D(t)$ for the monitoring agent is, for $k_{\gamma} > 0$,
 \begin{align}
 \psi_{D}(t) &= \psi_{T}(t) + \psi_{O}(t),
\label{eqn_chi_d} \\
\psi_{T}(t)=\psi_T\vert_{(x_{A}(t),y_{A}(t))}, & \
 \psi_{O}  =\arctan\left(k_\gamma(\gamma(t) -1)\right) . \nonumber
 \end{align}
 \begin{figure}[!h]
\centering
\includegraphics[width=0.9\linewidth]{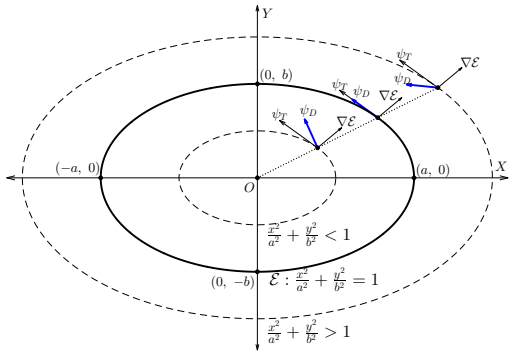}\\
\caption{The chosen agent heading $\psi_d$ shown inside, outside, and on the desired  ellipse $\mathcal{E}:\ \frac{x^2}{a^2}+\frac{x^2}{b^2}=1$}
\label{fig_ellipse}
\end{figure}

\begin{prop}
If $\psi_{A}(t)=\psi_{D}(t) $ in \eqref{eqn_unicycle}, then starting at any initial position $(x_{A}(0), y_{A}(0))$, the  agent  asymptotically converges to the desired ellipse $\mathcal E:\ \frac{x^2}{a^2}+\frac{y^2}{b^2}=1$.
\label{prop_lim_cycle}
\end{prop}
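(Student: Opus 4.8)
The plan is to treat the closed loop as a planar autonomous system and certify convergence with a Lyapunov function built from $\gamma$. Once we impose $\psi_{A}(t)=\psi_{D}(t)$, the heading in \eqref{eqn_unicycle} becomes a function of position alone, so the position $(x_{A},y_{A})$ obeys the autonomous field $\dot{x}_{A}=V_A\cos\psi_{D}$, $\dot{y}_{A}=V_A\sin\psi_{D}$ with $\psi_{D}=\psi_{D}(x_{A},y_{A})$. Since $\gamma(t)=1$ is exactly the target ellipse $\mathcal E$ and $|\gamma-1|$ measures how far the current concentric ellipse is from $\mathcal E$, I would take $V=\tfrac12(\gamma-1)^2$ as the candidate and aim to show $\dot V\le 0$, with equality only on $\mathcal E$.

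First I would compute $\dot\gamma$. Writing $\dot\gamma = 2V_A\big(\tfrac{x_{A}}{a^2}\cos\psi_{D}+\tfrac{y_{A}}{b^2}\sin\psi_{D}\big)$ and splitting the heading as $\psi_{D}=\psi_{T}+\psi_{O}$, the key geometric fact is that $\psi_{T}$ points along the tangent to the level ellipse through $(x_{A},y_{A})$, i.e.\ $(\cos\psi_{T},\sin\psi_{T})=\rho^{-1}(-a^2y_{A},\,b^2x_{A})$ with $\rho=\sqrt{a^4y_{A}^2+b^4x_{A}^2}$, so the tangential part contributes nothing to $\dot\gamma$. Only the rotation by $\psi_{O}$ survives, giving
\[ \dot\gamma = -\frac{2V_A}{\rho}\Big(\frac{b^2x_{A}^2}{a^2}+\frac{a^2y_{A}^2}{b^2}\Big)\sin\psi_{O}. \]
Because $\psi_{O}=\arctan\big(k_\gamma(\gamma-1)\big)$, the factor $\sin\psi_{O}$ has the same sign as $\gamma-1$, while the bracketed quadratic form and $\rho$ are strictly positive away from the origin.

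Then $\dot V=(\gamma-1)\dot\gamma\le 0$, vanishing only where $\gamma=1$ or $(x_{A},y_{A})=(0,0)$. Since the sublevel sets $\{V\le V(0)\}$ are bounded regions of the plane, every trajectory stays bounded, and I would invoke LaSalle's invariance principle: trajectories approach the largest invariant set contained in $\{\dot V=0\}$. The ellipse $\mathcal E$ is invariant, because on it $\psi_{O}=0$, the motion is purely tangential, and $\dot\gamma=0$. The origin, although in the zero set of $\dot V$, is not an equilibrium (the speed $V_A$ is nonzero there) and is never approached, since $\dot\gamma>0$ whenever $0<\gamma<1$ drives trajectories away from it; hence it is excluded from the invariant set. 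This leaves $\mathcal E$ as the only candidate, so $\gamma(t)\to 1$ and the agent converges to $\mathcal E$.

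The main obstacle is the coordinate singularity at the origin, where $\psi_{T}$ is undefined (set to zero by convention) and the field fails to be smooth; care is needed to argue that the origin is neither reachable nor part of the limiting behaviour, so that LaSalle delivers convergence to $\mathcal E$ rather than merely to $\mathcal E\cup\{0\}$. The only other delicate point is the tangent/normal decomposition that annihilates the $\psi_{T}$ contribution to $\dot\gamma$; once that identity is established, the sign of $\dot V$ and the boundedness of the sublevel sets are routine.
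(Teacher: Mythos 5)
Your proposal is correct and follows essentially the same route as the paper: the Lyapunov function $(\gamma-1)^2$ (yours differs only by a harmless factor of $\tfrac{1}{2}$), the decomposition $\psi_D=\psi_T+\psi_O$ with the tangential part orthogonal to the level-set normal, and LaSalle's invariance principle with the origin excluded from the largest invariant set because it is not an equilibrium. The only difference in execution is that where the paper argues the sign of $\left\langle \nabla\mathcal{E}_A, (\cos\psi_D,\sin\psi_D)\right\rangle$ qualitatively via the acute/obtuse angle with the outward normal in three cases, you compute $\dot\gamma$ in closed form via the angle-addition formula, which is a correct and slightly sharper rendering of the same step since it also exhibits an explicit decrease rate.
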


 \begin{proof}
 For $\psi_{A}(t)=\psi_{D}(t) $ the idealized unicycle agent has the following state equations:
 \begin{align}
 \dot{x}_{A}(t)=V_{A}\cos(\psi_{D}(t)),\ \dot{y}_{A}(t)=V_{A}\sin(\psi_{D}(t))
\label{eqn_unicycle_chi_d}
 \end{align}
Define the Lyapunov candidate function
\begin{align}
\mathcal{V}=\left(\frac{x_{A}^2(t)}{a^2}+\frac{y_{A}^2(t)}{b^2}-1\right)^2
\label{eqn_lyap_V}
\end{align}
By differentiating with respect to time we get
\begin{align}
\frac{d\mathcal{V}}{dt}
=2\left(\frac{x_{A}^2(t)}{a^2}+\frac{y_{A}^2(t)}{b^2}-1\right) \left\langle \left[\begin{matrix}
 2x_{A}(t)/a^2 \\ 2y_{A}(t)/b^2
\end{matrix} \right],\ \left[\begin{matrix}
\dot{x}_{A}(t) \\ \dot{y}_{A}(t)
\end{matrix} \right] \right\rangle \nonumber \\
=2V_{A}\left(\frac{x_{A}^2(t)}{a^2}+\frac{y_{A}^2(t)}{b^2}-1\right) \left\langle \nabla \mathcal{E}_{A},\ \left[\begin{matrix}
\cos(\psi_D(t)) \\ \sin(\psi_D(t))
\end{matrix} \right] \right\rangle
\label{eqn_lyap_V_dot}
\end{align}
where $\nabla \mathcal{E}_{A}$ is the gradient of the ellipse of the family $\frac{x^2}{a^2}+\frac{y^2}{b^2}=c$ at $(x_{A}(t),y_{A}(t))$ in the  outward normal direction. \\
\noindent {\it Case 1:} If the agent position $(x_{A}(t),\ y_{A}(t))$ is inside the desired ellipse, $\frac{x_{A}^2(t)}{a^2}+\frac{y_{A}(t)^2}{b^2}<1$, hence  $\gamma(t)\in[0\ 1)$. For $(x_{A}(t),\ y_{A}(t))=(0,0)$, $\frac{d\mathcal{V}}{dt}=0$, but  $(0,0)$ is not an equilibrium point of \eqref{eqn_unicycle_chi_d} by design (note that the vector field is discontinuous at $(0,0)$)  and the state trajectory moves out of $(0,0)$. For $(x_{A}(t),\ y_{A}(t))\neq(0,0)$ inside $\mathcal{E}$,  $\psi_{T}(t)$ is the counter-clockwise tangential direction along the ellipse perpendicular to $\nabla \mathcal{E}_{A}$ and $\psi_{O}(t) \in \left(\frac{-\pi}{2}, 0\right)$, the agent velocity vector with $\psi_A(t)=\psi_D(t)=\psi_{T}(t)+\psi_{O}(t)$ makes an acute angle with $\nabla \mathcal{E}$ as shown in the Fig. \ref{fig_ellipse}. Thus the inner product term in \eqref{eqn_lyap_V_dot} is positive. Therefore, as $\frac{x_{A}^2(t)}{a^2}+\frac{y_{A}^2(t)}{b^2}-1<0$, $\frac{d\mathcal{V}}{dt}< 0$ for all points inside $\mathcal{E}$ except $(0,0)$.

\noindent {\it Case 2:} If $(x_{A}(t),\ y_{A}(t))$ is outside the desired ellipse,$\frac{x_{A}^2(t)}{a^2}+\frac{y_{A}^2(t)}{b^2}>1$, hence $\gamma(t)\in(1,\ \infty)$ and $\psi_{O}(t) \in \left(0, \frac{\pi}{2}\right)$. Thus the agent velocity vector with heading direction $\psi_{A}(t)=\psi_D(t)=\psi_{T}(t)+\psi_{O}(t)$ makes an obtuse angle with $\nabla \mathcal{E}_{A}$ as shown in the Fig. \ref{fig_ellipse} and the inner product term in \eqref{eqn_lyap_V_dot} is negative. Therefore as $\frac{x_{A}^2(t)}{a^2}+\frac{y_{A}^2(t)}{b^2}-1>0$ at any point outside $\mathcal{E}$, $\frac{d\mathcal{V}}{dt}< 0$.

\noindent {\it Case 3:}  If $(x_{A}(t),\ y_{A}(t))$ is on the desired ellipse $\frac{x_{A}^2(t)}{a^2}+\frac{y_{A}^2(t)}{b^2}=1$, then  $\frac{d\mathcal{V}}{dt}=0$. Since $\gamma(t)=1$, $\psi_{O}(t)=0$ and $\psi_{A}(t)=\psi_{T}(t)$, which is the tangential direction along the ellipse. Thus the state trajectory always remains on  $\mathcal{E}$, implying that it is a positively invatiant set.

Thus $\frac{d\mathcal{V}}{dt}\leq 0$ for all $(x_{A}(t),y_{A}(t))\in\Bbb{R}^2$ and $\frac{d\mathcal{V}}{dt}= 0$  for $(x_{A}(t),y_{A}(t))\in E$, where $E=\mathcal{E}\cup \{(0,0)\}$. Since $\mathcal{E}$ is the largest invariant subset in $E$,  from any initial  $(x_{A}(0),y_{A}(0))$, $(x_{A}(t),y_{A}(t))$ approaches $\mathcal{E}$ asymptotically by LaSalle's invariance principle \cite{khalil}.
 \end{proof}

For a moving ellipse we now analyse the idealized coupled agent-ellipse dynamics using perturbation theory for differential equations. (Note that the actual algorithms use discretized versions of these.) Let $z_A(t) = [x_A(t), y_A(t)]^T$, $z_T(t) = [x_o(t), y_o(t), a(t), b(t), \theta_E(t)]^T$ denote respectively the position of the agent, and the vector of the ellipse parameters (center position, axis lengths and tilt) at time $t$ given by Algorithm \ref{algo_CCR} and \eqref{eqn_a_b_choice}.  Let their  respective dynamic laws be given by
\begin{align}
\dot{z}_A(t) &= h(z_A(t), z_T(t)), \label{agentdynamics} \\
\dot{z}_T(t) &= \epsilon g(z_T(t)). \label{targetdynamics}
\end{align}
Here $\epsilon > 0$ is small, so the target moves on a slower time scale than the agent. Let $\mathcal{D} := \Bbb{R}^4\times(-\pi,\pi]$. We assume $h : \Bbb{R}^2\times\mathcal{D} \mapsto \Bbb{R}^2$ and $g : \mathcal{D}  \mapsto \Bbb{R}^2$ are Lipschitz, so (\ref{agentdynamics}), (\ref{targetdynamics}) are well posed. Also consider
\begin{equation}
\dot{\tilde{z}}_A(t) = h(\tilde{z}_A(t), z_T^*), \label{frozen}
\end{equation}
i.e., dynamics for the agent when target  is stationary at point $z_T^*$. From  Proposition \ref{prop_lim_cycle}, we know that this has a limit cycle (i.e., a periodic solution) $\chi(t, z^*_T), t \geq 0,$ parametrized by $z_T^*$. In fact, it traces an ellipse with center and orientation given by $z_T^*$. Then
\begin{equation}
\dot{\chi}(t) = h(\chi(t), z_T^*). \label{cycle}
\end{equation}
Let $w(t) := \chi(t, z_T(t)).$ This is the equation for the agent trajectory where we have made the parameter $z^*_T$ of the above  periodic solution time-varying, albeit on a slower time scale as per (\ref{targetdynamics}).
Letting $\nabla^z :=$ the gradient w.r.t.\ $z$,
\begin{align}
\dot{w}(t) &= \frac{\partial \chi}{\partial t}(t, z_T(t)) + \langle \nabla^z\chi(t, z_T(t)), \dot{z}_T(t) \rangle \nonumber \\
&= h(\chi(t, z_T(t)), z_T(t)) + \epsilon\langle \nabla^z\chi(t, z_T(t)), g(z_T(t)) \rangle \nonumber \\
&= h(w(t), z_T(t)) + \epsilon\eta(t), \label{perturb}
\end{align}
where $\eta(t) := \langle \nabla^z\chi(t, z_T(t)), g(z_T(t)) \rangle.$
This can be viewed as a perturbation of (\ref{agentdynamics}). Next we use the Alekseev formula \cite{Alekseev} to give an explicit expression for the error between the two. Let $Z(t) := [z_A(t), z_T(t)]^T, \ F_{\epsilon} (z, z') = [h(z, z')^T, \epsilon g(z')^T]^T$.  Then  the combined dynamics (\ref{agentdynamics})-(\ref{targetdynamics}) is
$\dot{Z}(t) = F_{\epsilon}(Z(t)).$
Let $Z(t, \tau; \hat{z}), t \geq \tau,$ denote its solution for $Z(\tau) = \hat{z}$. Consider the linearization of (\ref{agentdynamics})-(\ref{targetdynamics}) given by
\begin{equation}
\delta\dot{Z}(t) = DF_{\epsilon}(Z(t, \tau; \hat{z}))\delta Z(t), \ t \geq \tau, \label{linearize}
\end{equation}
where $DF_{\epsilon}(\cdot)$ is the Jacobian matrix of $F_{\epsilon}$. This is a time-varying linear system. Let $\Phi_{\epsilon}(t, \tau; \hat{z}), t \geq \tau,$ denote its fundamental matrix satisfying
$$\dot{\Phi}_{\epsilon}(t, \tau; \hat{z}) = DF_{\epsilon}(Z(t, \tau; \hat{z}))\Phi_{\epsilon}(t, \tau; \hat{z}), \ t \geq\tau,$$
with $ \Phi_{\epsilon}(\tau, \tau; \hat{z}) = I :=$ the identity matrix,  and
$\delta Z(t) = \Phi_{\epsilon}(t, \tau; \hat{z})\delta Z(\tau).$
Then by Alekseev's  formula (\cite{Alekseev}, see also Lemma 3 of \cite{Brauer}), we have
\begin{align}
z_A(t) = w(t) - \epsilon\int\limits_0^t\bar{\Phi}_{\epsilon}(t, \tau; w(\tau))\eta(\tau)d\tau, \ t \geq 0. \label{error2}
\end{align}
where $\bar{\Phi}_{\epsilon}(\cdot)$ is the submatrix of $\Phi_{\epsilon}(\cdot)$ formed by its top half rows.
This gives an explicit expression for the error between the actual agent dynamics $z_A(\cdot)$ and its ideal dynamics $w(\cdot)$. That this error remains small for all time  can be proved by a variant of Theorem 1, p.\ 339, \cite{Hirsch}. In \cite{Hirsch}, this result is stated for stable equilibria, but the same proof works here in view of the explicit Lyapunov function for (\ref{frozen}) exhibited earlier, as we argue below.

For an arbitrary ellipse in $\Bbb{R}^2$, if $\mathcal{F}$ is  the ellipse centric frame with  origin at ellipse center $(x_o(t),y_o(t))$ and tilt angle $\theta_E(t) \in(-\pi,\ \pi]$ relative to the global reference frame, then in the frame $\mathcal{F}$  the agent position and heading are given by
\begin{align}
\left[\begin{matrix}
x_E(t) \\ y_E(t)\\ \psi_E(t)
\end{matrix}\right]=\left[\begin{matrix}
 R_{\theta_E}(t) \left[ \begin{matrix}x_A(t)-x_o(t) \\ y_A(t)-y_o(t) \end{matrix}\right]\\
\psi_A(t)-\theta_E(t)
\\
\end{matrix}\right].
\end{align}

Consider the dynamics in the reference frame $\mathcal{F}$ centered at $(x_o,y_o)$ for the moving ellipse (shown in Fig. \ref{fig_ellipse_geometry}).
  We assume that $\dot{x}_o(t), \dot{y}_o(t), \dot{\theta}_E(t), \dot{a}(t), \dot{b}(t)= O(\epsilon)$ uniformly in $t$. Consider the time-dependent Lyapunov function
 \begin{align}
\tilde{\mathcal{V}}(z_A, t) :=\left(\frac{x_E(t)^2}{a^2(t)}+ \frac{y_E(t)^2}{b^2(t)} -1\right)^2.
\end{align}
Let $\zeta(t)=\frac{x_E(t)^2}{a^2(t)}+ \frac{y_E(t)^2}{b^2(t)} -1$. Then by arguments similar to those of Proposition \ref{prop_lim_cycle}, we have
\begin{align}
\frac{d}{dt}\tilde{\mathcal{V}}(z_A(t),t) = 2V_A\zeta(t) \left\langle \nabla \mathcal{E}_{A},\ \left[\begin{matrix}
\cos(\psi_d) \\ \sin(\psi_d)
\end{matrix} \right] \right\rangle \nonumber \\
+ \  2V_A\zeta(t)\left\langle\nabla\mathcal{E}_A, \left[\begin{matrix}\delta_1(t)\\ \delta_2(t))\end{matrix}\right] \right\rangle,
\end{align}
where $\|\delta_i(t)\| = O(\epsilon), \ i = 1,2$. Label the two summands on the right as $\xi_1(t), \xi_2(t)$ resp. Then $\|\xi_2(t)\| \leq K\epsilon$ for a constant $K$ that can be estimated in terms of the problem parameters. Consider a point outside the current ellipse. Then as in Proposition 1, $\zeta(t) > 0$ and $\xi_1(t) < 0$. Then as long as
$\|\xi_1(t)\| > K\epsilon$,
we have $\frac{d}{dt}\tilde{\mathcal{V}}(z_A(t), t) < 0$, implying not only stability (i.e., the trajectory of the agent remains bounded as long as that of the target does), but by the LaSalle invariance principle, that the trajectory converges to the set
$$\left\{(x_A, y_A) :  \Gamma(t) := \left|\left\langle\nabla\mathcal{E}_A, \left[\begin{matrix}\cos(\psi_d)\\ \sin(\psi_d)\end{matrix}\right] \right\rangle  \right| \leq K\epsilon \right\}.$$
A similar conclusion holds if the initial condition is inside the moving ellipse. Since $\Gamma(t)$ vanishes only when the agent is exactly on the desired moving ellipse, it follows that the agent converges to an $O(\epsilon)$ neighborhood thereof.

A similar analysis can be used to establish robustness to small errors, e.g., in numerical computation or due to noise.

\bigskip

\begin{remark} The above suggests the use of $\epsilon\eta(\cdot)$ as explicit additive control for the agent's dynamics (\ref{agentdynamics}) as in (\ref{perturb}), in order to achieve \textit{exactly} the desired trajectory $w(\cdot)$. This, however, would require advance knowledge of target motion.
\end{remark}
\bigskip
\begin{remark}  For related results on stability  of slowly varying linear systems that give a handle on $\|\Phi_{\epsilon}(t, \tau; \hat{z})\|$ above, see \cite{DaCunha} and \cite{Solo}.
\end{remark}
\bigskip

The proposed guidance law can handle both counter-clockwise and clockwise path following along the ellipse using the heading commands $\psi_{d_{ccw}}(t)$ and $\psi_{d_{cw}}(t)$ given by
\begin{align}
\psi_{D_{ccw}}(t)&= \psi_{T_{ccw}}(t)+\psi_{O}(t), \nonumber\\
\psi_{D_{cw}}(t)&= \psi_{T_{cw}}(t)-\psi_{O}(t),
\label{eqn_ccw_cw}
\end{align}
where  $$\psi_{T_{ccw}}(t)=\arctan2 \left(b^2(t)x_E(t),-a^2(t)y_E(t)\right),$$ $$\psi_{T_{cw}}(t)=\arctan2 \left(-b^2(t)x_E(t),a^2(t)y_E(t)\right),$$ and $\psi_O(t)=\arctan\left(k_\gamma(\gamma(t)-1)\right),$ for $\gamma(t)=\frac{x_{E}^2(t)}{a^2(t)}+\frac{y_{E}^2(t)}{b^2(t)}$ and controller gain $k_\gamma$.  In order to ensure that the agent heading $\psi_A$ follows the desired heading $\psi_D$, we use a proportional feedback control to command the agent's angular velocity $\omega_A(t)$ defined in (\ref{eqn_unicycle}):
\begin{align}
\omega_A(t) \  (= \dot{\psi}_{A(t)}) = k_\psi (\psi_D(t)-\psi_E(t)).
\end{align}
This is the classical proportional control that acts to push $\psi_{E(t)}$ towards $\psi_{D(t)}$.

\section{Simulation Results}
\label{sec_sim_res}
To validate the vector field based guidance strategy, simulations were done for different stationary ellipses having different orientations $\theta_E$ relative to the inertial frame, with the tracking agent starting from an arbitrary initial pose. Two such cases are shown in Fig. \ref{fig_stat_ellipse_track}. In the first, the agent starts outside the ellipse and follows it in the counter-clockwise direction by tracking $\psi_{D_{ccw}}$ in \eqref{eqn_ccw_cw}. In the second, the agent starts inside the ellipse and follows it in the clockwise direction by tracking $\psi_{D_{cw}}$ in \eqref{eqn_ccw_cw}. In both cases the major and minor axes are chosen such that the minimum  turn radius on ellipse $\mathcal{R}_{min}<\mathcal{R}_A$ where $\mathcal{R}_A=\frac{V_{A_{max}}}{\omega_{max}}$ is the minimum agent turn radius. From Fig. \ref{fig_stat_ellipse_data} we see that except in the initial phase where the agent is  trying to align with the desired vector field direction, the Lyapunov function $\mathcal{V}$ decreases with time in either case  and agent angular velocity satisfies $\vert\omega_{A} \vert<\omega_{max}$. This confirms that the agent converges to the desired ellipse and thereafer traces it without violating the angular velocity constraints.
 \begin{figure}
\begin{minipage}[b]{\linewidth}
\centering
\includegraphics[width=0.8\linewidth]{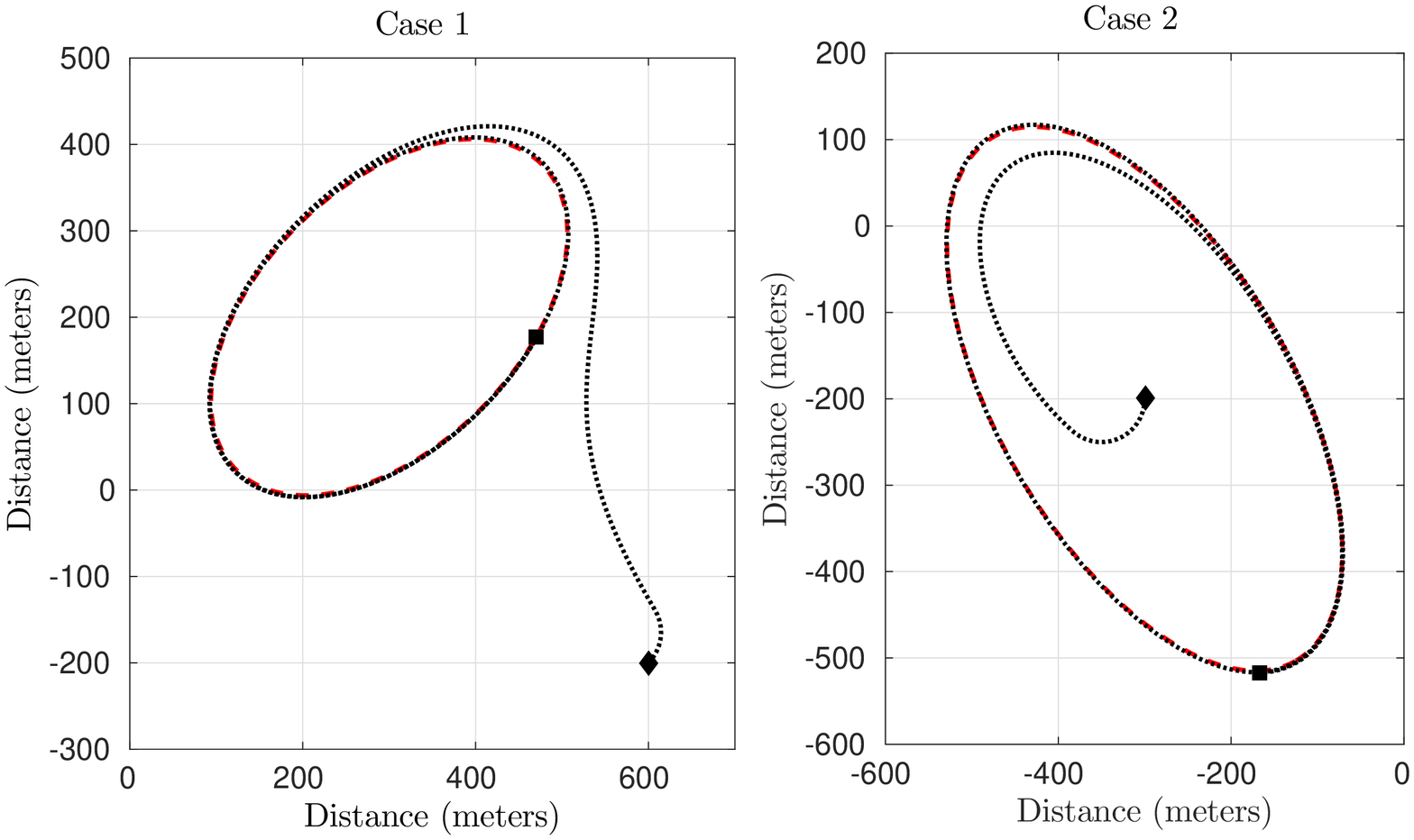}\\
\caption{ {\bf Case 1:} Ellipse parameters: $a=250$ meters, $b=150$ meters, $\theta_E=\frac{\pi}{4}$ radians, $(x_o,y_o)=(300,200)$ (in meters)
Initial agent pose: $(x_A,y_A)=(600,-200)$ (in meters), $\psi_A=\frac{\pi}{4}$ radians. {\bf Case 2:} Ellipse parameters: $a=350$ meters, $b=170$ meters, $\theta_E=-\frac{\pi}{3}$ radians, $(x_o,y_o)=(-300,-200)$ (in meters)
Initial agent pose: $(x_A,y_A)=(-300,-200)$ (in meters), $\psi_A=-\frac{\pi}{2}$ radians. Controller gains: $k_{\gamma}=0.5$, $k_\psi=1$. $V_A=15$ m/s and $\vert\omega_{max}\vert=0.3$ rad/sec.}
\label{fig_stat_ellipse_track}
\end{minipage}
\begin{minipage}[b]{\linewidth}
\centering
\includegraphics[width=1\linewidth]{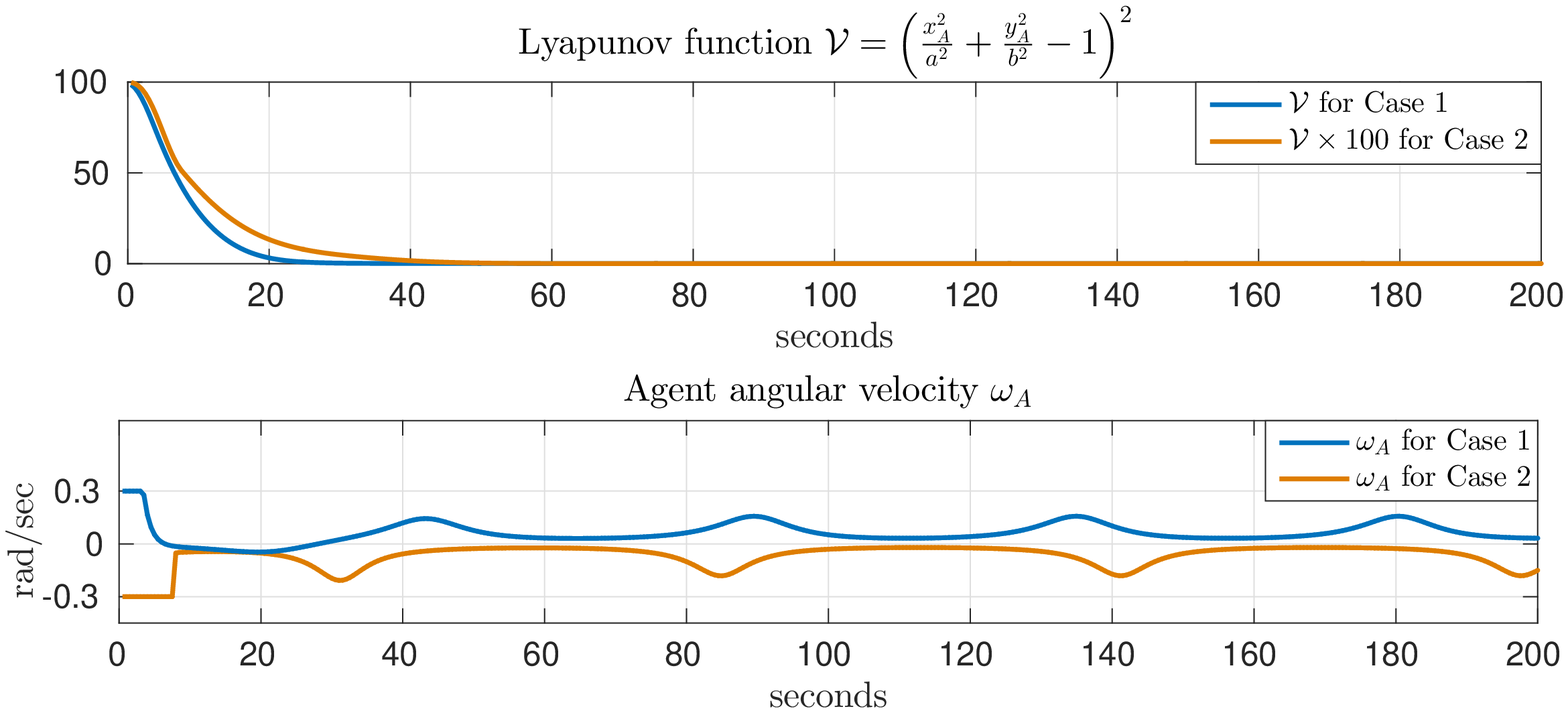}\\
\caption{The Lyapunov function and angular velocity plots for the cases in Fig. \ref{fig_stat_ellipse_track}. The Lyapunov function for {\bf Case 2} is scaled by a factor of $100$ to plot it on the same scale as {\bf Case 1}. In both cases $\vert\omega_{max}\vert=0.3$ rad/sec.}
\vspace{-0.5 cm}
\label{fig_stat_ellipse_data}
\end{minipage}
\end{figure}

To simulate Algorithm \ref{algo_CCR} and the guidance strategy for a moving convoy, the convoy is assumed to move along a Lissajous curve having a parametric equation $$x(\phi)=A\cos(\phi),\ y(\phi)=B\sin(2\phi) $$ with  $A=1500$ meters, $B=1000$ meters and $\phi\in[0,\ 2\pi)$.  The convoy comprises of five target points starting from different parameter values $\phi_i(0)$ moving at a constant parametric speed of $\dot{\phi}= 0.0012$. Thus speed of the target $i$ is
\begin{eqnarray*}
V_i &=& \sqrt{A^2\sin^2(\phi_i(t))+4B^2\cos^2(2\phi_i(t))}\dot{\phi} \\
&\leq& \sqrt{A^2+4B^2}\dot{\phi}=3 \ \mbox{m/sec}.
\end{eqnarray*}
Since the target speeds are not constant, the curve length separating the target points  varies with time as seen in Fig. \ref{fig_c_2_traj_combo} at different instances of time. The unicycle agent's linear velocity is $V_A=15$ m/sec, the permissible range of velocities is assumed to be $V_{A_{max}}=20$ meters/sec and $V_{A_{min}}=10$ m/sec and the maximum permissible angular velocity is $\vert\omega_{max}\vert=0.3$ rad/sec. The intial position and orientation of the unicycle agent is assumed to be $(x_A(0),y_A(0))=(400,400)$ (in meters) and $\psi_A(0)=\frac{-\pi}{2}$ radians. The parameters $a,b$ of the ellipse's axes are selected according to \eqref{eqn_a_b_choice}. The controller gain values are $k_\psi=1$ and $k_{\gamma}=5$.
 \begin{figure}
\begin{minipage}[b]{\linewidth}
\centering
\includegraphics[width=1\linewidth]{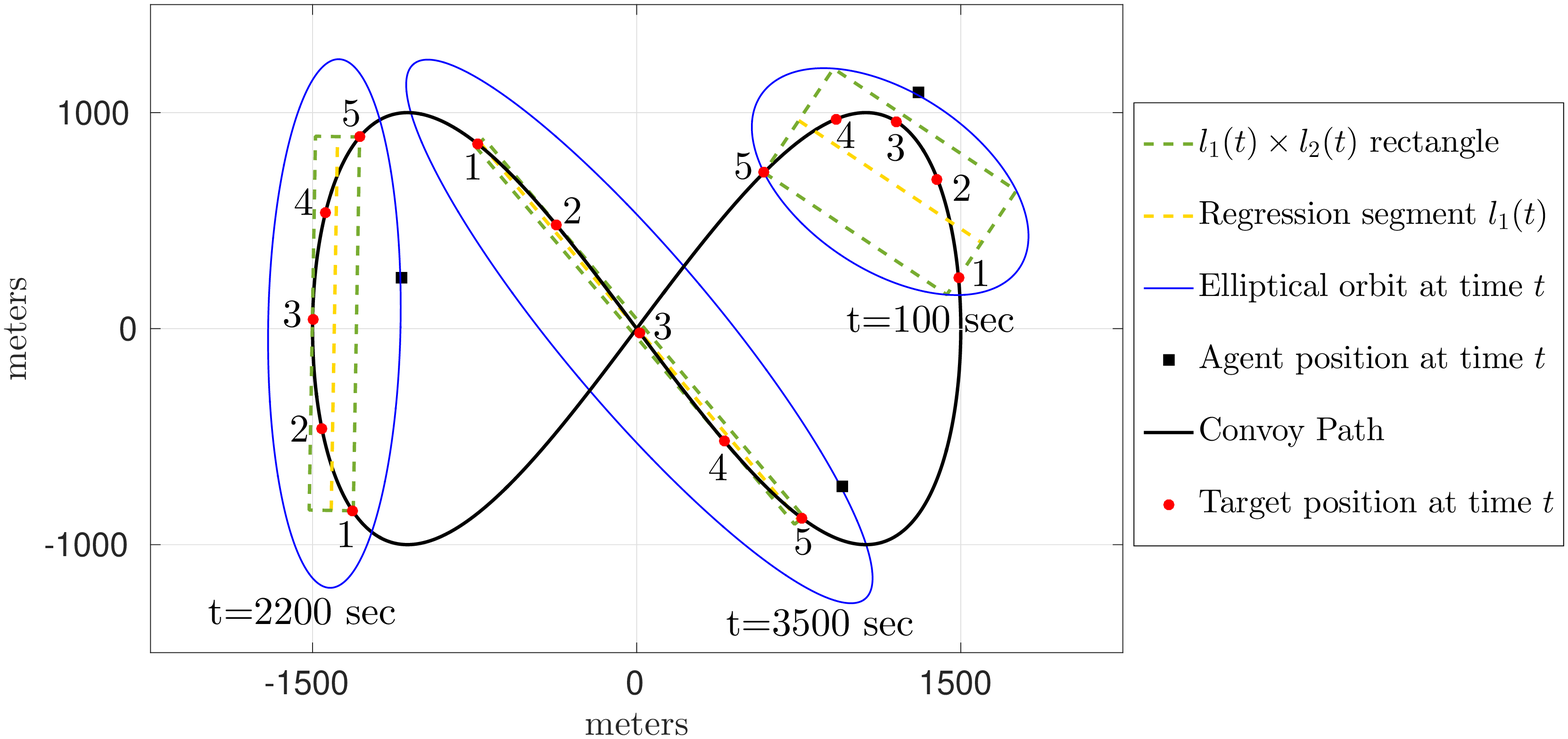}\\
\caption{Simulation 2 snapshots for time $t=100$, $2200$ and $3500$ seconds.}
\label{fig_c_2_traj_combo}
\end{minipage}
\begin{minipage}[b]{\linewidth}
\centering
\includegraphics[width=1\linewidth]{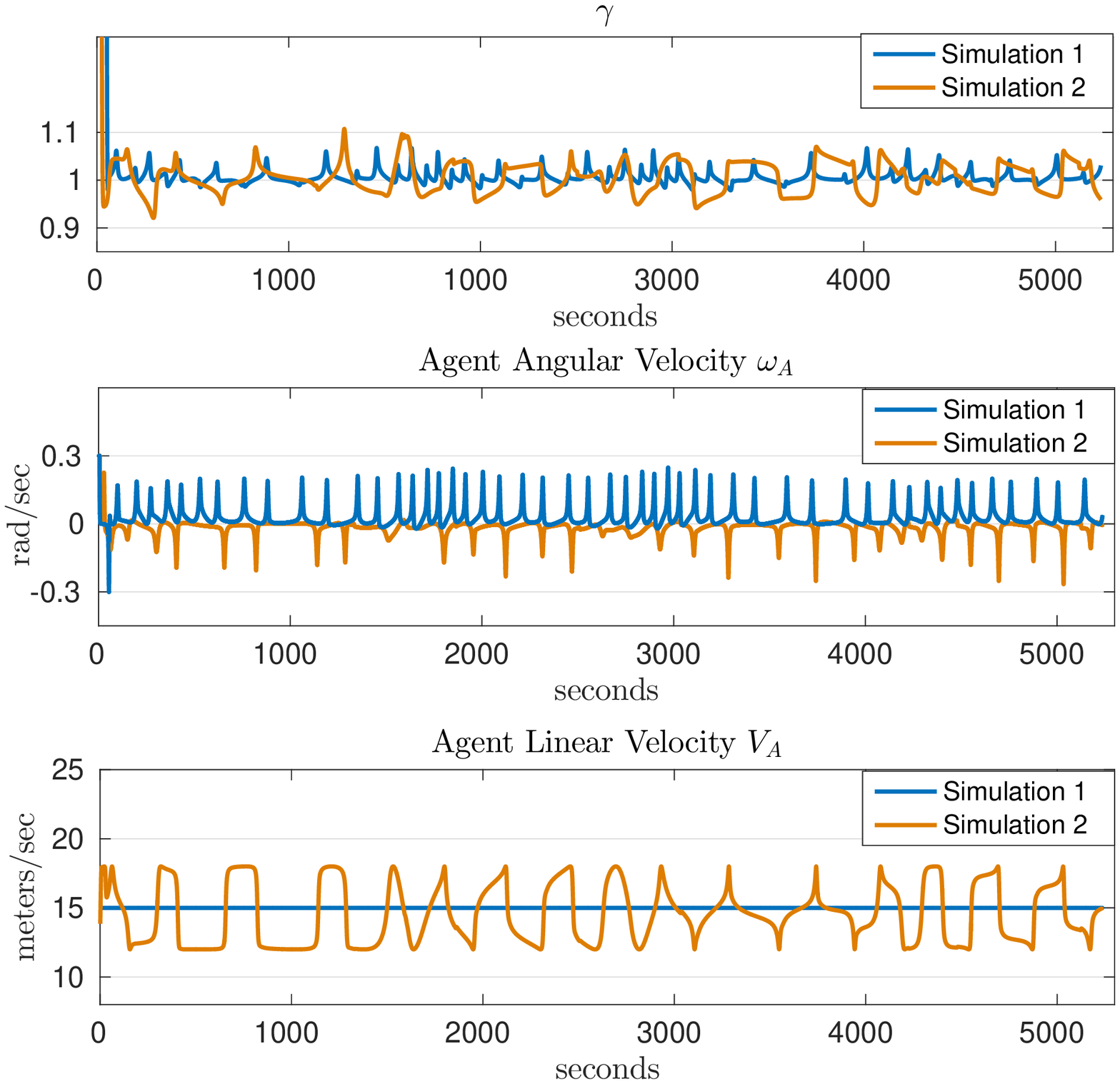}\\
\caption{$\gamma$, $\omega_A$ and $V_A$ plots for one traversal of the Lissajous curve for both simulations.}
\label{fig_sim_data_combo}
\end{minipage}
\begin{minipage}[b]{\linewidth}
\centering
\includegraphics[width=1\linewidth]{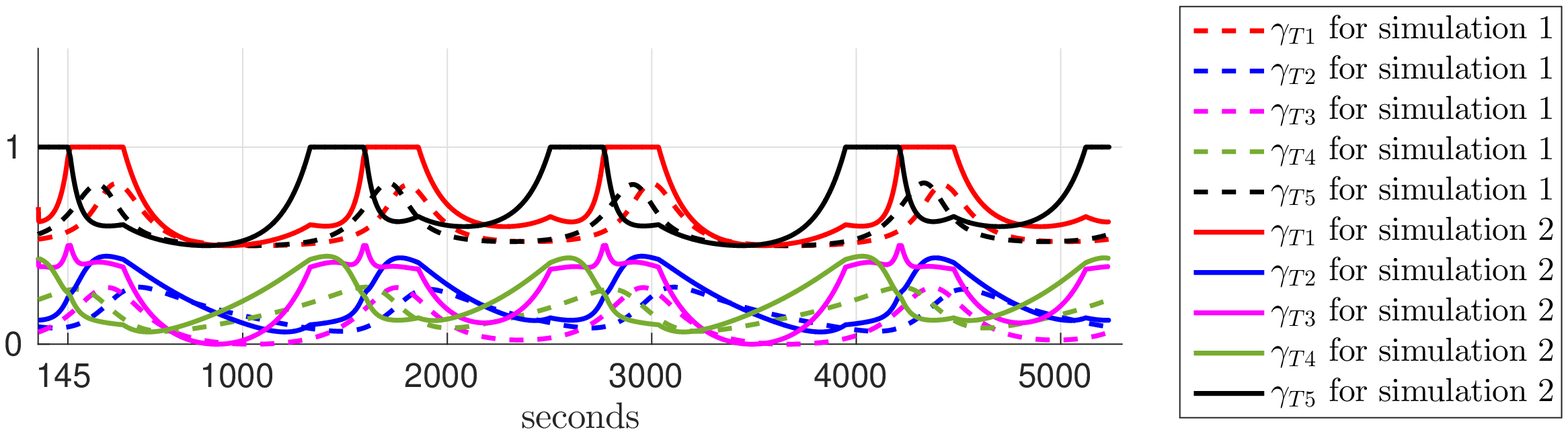}\\
\caption{$\gamma_{Ti}$ plots for one traversal of the Lissajous curve for both simulations.}
\vspace{-0.5 cm}
\label{fig_sim_convoy_data_combo}
\end{minipage}
\end{figure}

For the above agent target setting, we consider the following two simulations. For  simulation 1, the targets are spaced close to each other with $\phi_i(0)=\frac{(i-1)\pi}{20}$, and the agent follows the elliptical orbit counter-clockwise. For simulation 2, the targets are spaced farther apart with $\phi_i(0)=\frac{(i-1)\pi}{12}$,  and the agent follows the elliptical orbit clockwise. Also in simulation 2, a small constant velocity disturbance of $V_{w}=3$ m/sec at a heading of $\psi_w=\frac{\pi}{4}$ radians relative to the global reference frame is added component-wise to the $\dot{x}_A(t)$ and $\dot{y}_A(t)$ of the unicycle agent. For a UAV this is like a velocity  disturbance due to wind.

The snapshots in time of the convoy target positions and the encircling  elliptical orbits for simulation 2 are shown in Fig. \ref{fig_c_2_traj_combo}. We observe that at $t=2200$ seconds, the targets are at positions roughly along the inertial $y$ axis direction and the Algorithm \ref{algo_CCR} fits the correct regression line for these target positions avoiding the regression issue discussed in Remark \ref{rem_reg_issue}. We see that $\gamma_{Ti}(t)=\frac{x_{Ei}^2(t)}{a^2(t)}+\frac{y_{Ei}^2(t)}{b^2(t)}\leq 1$ for all targets at position $(x_{Ei}(t),y_{Ei}(t))$ in the ellipse centric frame at time $t$ for both simulations, as shown in Fig. \ref{fig_sim_convoy_data_combo}.  This validates the claim that the ellipse computed based on the outputs of Algorithm \ref{algo_CCR} (as discussed in Section \ref{sec_algo}) always circumscribes the targets.
From Fig. \ref{fig_sim_convoy_data_combo}, for the first and fifth agents in simulation 2,  $\gamma_{Ti}=1$ for certain time durations. For example, $\gamma_{T5}(t)=1$ for $t \in [0,145]$ seconds. This implies that target $5$ lies on the ellipse during this duration. This  happens when an agent is on one of the corners of the $l_1(t)\times l_2(t)$ rectangle and the ellipse parameters $a,b$ selected according to \eqref{eqn_a_b_choice} yield the minimum area ellipse circumscribing this rectangle, as shown in Fig. \ref{fig_c_2_traj_combo} at $t=100$ seconds.
  For both simulations, $\vert \omega(t)\vert<\omega_{max}$  and $\gamma(t)\approx 1$  as shown Fig. \ref{fig_sim_data_combo}. This implies that the agent follows the moving ellipse with a small error as discussed in Section \ref{sec_guidance} without violating angular velocity constraints. The peaks in the  $\omega_A(t)$ occur near the ends of the major axis where the ellipse curvature is the highest. Since the agent travels in counter-clockwise orbits in simulation 1 and clockwise orbits in simulation 2, the peaks in $\omega_A(t)$ plots for simulation 1 and simulation 2 are positive and negative respectively. As the parametric spacing between targets is less in simulation 1 than in simulation 2, the  elliptical orbits in simulation 1 are smaller and the convoy is circumnavigated more often in simulation 1. Thus the number of the peaks in $\omega_A(t)$ plots is greater in simulation 1 than in simulation 2.

\bigskip

\begin{remark} A video  of the simulations 1 and 2  for one complete traversal of the Lissajous curve by the convoy can be found at the web-link: \url{https://www.youtube.com/watch?v=57R6Tf71r5c}
\end{remark}

 \section{Conclusions}

We have proposed a novel scheme for protection and surveillance of a convoy moving on an arbitrary trajectory with minimal regularity assumptions. The scheme is based on computing a moving ellipse that circumscribes a bounding rectangle encompassing the convoy. This  rectangle is computed using a  simple regression scheme. Then by modulating the driving vector field of  the agent appropriately, it converges to a trajectory that traverses the moving elliptical orbit repeatedly. The elliptical orbits prove to be more economical than circular ones in terms of coverage. The scheme is very simple to implement and has been given a rigorous justification inclusive of error analysis using Alekseev's nonlinear variation of constants formula. The supporting simulations also show performance that matches the theoretical predictions.\\

The present analysis is restricted to a single agent. We are in the process of implementing a multi-agent version which will be reported in a sequel. Another future direction is to incorporate realistic noise models and ensure robustness vis-a-vis the same.

\section*{Acknowledgements}
The authors thank  G.\ K.\ Arunkumar  and Dr.\  Pranjal Vyas for discussions. Work done by VSB was supported in part by a J.\ C.\ Bose Fellowship.

\section*{Appendix}
\begin{lemma}
The area of an ellipse $\mathcal{E}:\frac{x^2}{a^2}+\frac{y^2}{b^2}=1$ circumscribing a rectangle of dimensions $l_1\times l_2$ with $l_1>l_2>0$ is minimized by $a=\frac{l_1}{\sqrt{2}}$, $b=\frac{l_2}{\sqrt{2}}$.
\label{lem_min_A}
\end{lemma}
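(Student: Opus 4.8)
The plan is to work in the rectangle's natural frame, in which the rectangle is centered at the origin with sides parallel to the coordinate axes and corners at $(\pm l_1/2, \pm l_2/2)$; this is precisely the frame in which the ellipse is presented as $\frac{x^2}{a^2}+\frac{y^2}{b^2}=1$, so the only free quantities are $a,b>0$. My first step is to reduce the circumscription requirement to a single scalar constraint. Because the closed region bounded by the ellipse is convex and the rectangle is the convex hull of its four corners, the ellipse contains the rectangle if and only if it contains all four corners; and by the shared symmetry of both figures about the two axes, containment of all four corners is equivalent to the single inequality
\begin{align}
\frac{l_1^2}{4a^2}+\frac{l_2^2}{4b^2}\le 1.
\label{eqn_lem_constraint}
\end{align}

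Next I would show that any area-minimizing ellipse makes \eqref{eqn_lem_constraint} an equality. If instead the left-hand side equalled some $s<1$, then replacing $(a,b)$ by $(\sqrt{s}\,a,\sqrt{s}\,b)$ would turn the left-hand side into exactly $1$ while shrinking the area $\pi ab$ by the factor $s<1$; hence a strict inequality is never optimal, and the minimization may be carried out under the binding constraint $\frac{l_1^2}{4a^2}+\frac{l_2^2}{4b^2}=1$. To solve it cleanly I would substitute $u=\frac{l_1^2}{4a^2}$ and $v=\frac{l_2^2}{4b^2}$, so that the constraint reads $u+v=1$ with $u,v>0$, and $ab=\frac{l_1 l_2}{4\sqrt{uv}}$. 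Minimizing $ab$ is therefore equivalent to maximizing $uv$ subject to $u+v=1$, which by the AM--GM inequality occurs uniquely at $u=v=\tfrac12$. This yields $a^2=l_1^2/2$ and $b^2=l_2^2/2$, i.e. $a=\frac{l_1}{\sqrt 2}$ and $b=\frac{l_2}{\sqrt 2}$; the fact that $uv\to 0$ (hence $ab\to\infty$) as $u\to 0$ or $u\to 1$ confirms this critical point is the minimizer, and $l_1>l_2$ gives the required $a>b>0$.

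I expect no serious obstacle here, as the optimization is elementary once the geometry is stripped away. The only points demanding care are the two reductions: first, that containment of the whole rectangle collapses to the single corner inequality \eqref{eqn_lem_constraint} (using convexity of the elliptical region together with the axial symmetries), and second, that the minimizer must saturate that constraint (the scaling argument above). An equivalent finish via a Lagrange multiplier on $\pi ab$ against \eqref{eqn_lem_constraint} is available, but the AM--GM substitution is the most transparent route and avoids any second-order verification.
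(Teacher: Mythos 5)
Your proof is correct, but it follows a genuinely different route from the paper's. The paper takes the circumscription condition as given (the four corners $(\pm l_1/2,\pm l_2/2)$ lie on the ellipse, so $\frac{l_1^2}{4a^2}+\frac{l_2^2}{4b^2}=1$), eliminates $a$ via the eccentricity relation $b=a\sqrt{1-e^2}$, and minimizes the single-variable area function $A(e)=\frac{\pi}{4}\bigl(l_1^2\sqrt{1-e^2}+\frac{l_2^2}{\sqrt{1-e^2}}\bigr)$ by computing $\frac{dA}{de}$ and $\frac{d^2A}{de^2}$ and checking second-order sufficient conditions at the critical points $e^*=0$ and $e^*=\pm\sqrt{1-l_2^2/l_1^2}$. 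You instead (i) justify that containment of the whole rectangle reduces to the single corner inequality $\frac{l_1^2}{4a^2}+\frac{l_2^2}{4b^2}\le 1$ via convexity and symmetry, (ii) show by a scaling argument that the constraint must bind at the optimum, and (iii) substitute $u=\frac{l_1^2}{4a^2}$, $v=\frac{l_2^2}{4b^2}$ so that minimizing $ab=\frac{l_1l_2}{4\sqrt{uv}}$ under $u+v=1$ becomes maximizing $uv$, settled by AM--GM with the boundary behaviour $uv\to 0$ ruling out anything else. Your version buys a cleaner and more elementary argument: it avoids the eccentricity parametrization and all second-derivative bookkeeping, and it certifies a \emph{global} minimum directly rather than only verifying local second-order conditions. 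It is also slightly more careful about the modelling step, since it proves rather than assumes that the optimal containing ellipse passes through the corners. The paper's calculus route is more mechanical but requires no cleverness in choosing coordinates, and its explicit $e^*$ makes the eccentricity of the optimal ellipse visible. Both arrive at $a=\frac{l_1}{\sqrt{2}}$, $b=\frac{l_2}{\sqrt{2}}$, and your observation that $l_1>l_2$ gives $a>b>0$ matches the paper's standing assumption.
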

\begin{proof}
For an ellipse the semi major axis $a$ and minor axis $b$ are related as: $b=a\sqrt{1-e^2}$, where $e\in[0,\ 1)$ is the eccentricity. The vertices of the rectangle at $\left(\frac{\pm l_1}{2}, \frac{\pm l_2}{2}\right)$ lie on the ellipse. Thus $\frac{l_1^2}{4a^2}+\frac{l_2^2}{4b^2}=1$, which simplifies to
$a^2= \frac{1}{4}\left(l_1^2+\frac{l_2^2}{1-e^2}\right) $
As a result, the area of the ellipse is $A=\pi ab=\pi a^2\sqrt{1-e^2}=\frac{\pi}{4}\left(l_1^2\sqrt{1-e^2}+\frac{l_2^2}{\sqrt{1-e^2}}\right)$. The  first two derivatives of $A$ w.r.t.\ $e$ are: 
\begin{eqnarray*}
\frac{dA}{de}&=&\frac{\pi}{4}\frac{e(l_2^2-l_1^2(1-e^2))}{(1-e^2)^{\frac{3}{2}}}, \ \\
\frac{d^2A}{de^2}&=&\frac{\pi}{4}\frac{l_2^2(1+2e^2) -l_1^2(1-e^2)}{(1-e^2)^{\frac{5}{2}}}.
\end{eqnarray*}

 From first order necessary conditions for minimization,  $\frac{dA}{de}=0$, extremizer value $e^*=0,\pm\sqrt{1-\frac{l_2^2}{l_1^2}}$. Evaluating $\frac{d^2A}{de^2}$ at $e^*$, $$\left.\frac{d^2A}{de^2}\right\vert_{e^*=0}=-\frac{\pi(l_1^2-l_2^2)}{4}<0,$$ \ $$\left.\frac{d^2A}{de^2}\right\vert_{e^*=\pm\sqrt{1-\frac{l_2^2}{l_1^2}}}=\frac{\pi l_1^3}{2l_2^3}(l_1^2 - l_2^2)>0$$ 
 as $l_1>l_2>0$. As $e^*\in[0,\ 1)$, from second order sufficient conditions of minimization,  $e^*=\sqrt{1-\frac{l_2^2}{l_1^2}}$ minimizes $A$. Using the ellipse relation $\frac{b^2}{a^2}=\sqrt{1-e^2}$, for the minimum area ellipse $\frac{l_1}{a}=\frac{l_2}{b}=\eta$ for a positive constant $\eta$. Substituting this in  $\frac{l_1^2}{4a^2}+\frac{l_2^2}{4b^2}=1$, we get $\eta=\sqrt{2}$. Hence $a=\frac{l_1}{\sqrt{2}}$ and $b=\frac{l_2}{\sqrt{2}}$
\end{proof}

\begin{lemma}
For an ellipse $\mathcal{E}:\frac{x^2}{a^2}+\frac{y^2}{b^2}=1$ with $a>b>0$, the minimum radius of curvature $\mathcal{R}_{min}=\frac{b^2}{a}$
\label{lem_min_R}
\end{lemma}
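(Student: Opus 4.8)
The plan is to use the standard parametrization of the ellipse together with the curvature formula for plane curves, and then reduce the question to a one-variable minimization. First I would write $x(t) = a\cos t$, $y(t) = b\sin t$ for $t \in [0, 2\pi)$, so that the first and second derivatives are $x' = -a\sin t$, $y' = b\cos t$, $x'' = -a\cos t$, and $y'' = -b\sin t$.

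Next I would apply the parametric curvature formula $\kappa = \frac{|x'y'' - y'x''|}{(x'^2 + y'^2)^{3/2}}$. The numerator simplifies neatly, since $x'y'' - y'x'' = ab\sin^2 t + ab\cos^2 t = ab$. Hence the curvature is $\kappa(t) = \frac{ab}{(a^2\sin^2 t + b^2\cos^2 t)^{3/2}}$, and the radius of curvature is its reciprocal, $\mathcal{R}(t) = \frac{(a^2\sin^2 t + b^2\cos^2 t)^{3/2}}{ab}$.

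Then I would minimize $\mathcal{R}(t)$ over $t$. Since $u \mapsto u^{3/2}$ is increasing and $ab>0$ is constant, minimizing $\mathcal{R}$ is equivalent to minimizing $f(t) := a^2\sin^2 t + b^2\cos^2 t = b^2 + (a^2 - b^2)\sin^2 t$. Because the hypothesis $a > b > 0$ forces $a^2 - b^2 > 0$, the minimum of $f$ is attained precisely when $\sin^2 t = 0$, i.e.\ at $t = 0, \pi$, corresponding to the endpoints $(\pm a, 0)$ of the major axis, where $f = b^2$. Substituting back yields $\mathcal{R}_{min} = \frac{(b^2)^{3/2}}{ab} = \frac{b^2}{a}$, as claimed.

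I do not anticipate a serious obstacle here; the only care needed is in correctly deriving and simplifying the curvature expression (the cancellation producing a constant numerator is the one pleasant feature), and in noting that the assumption $a > b$ is exactly what pins the minimizing parameter to the major-axis endpoints rather than the minor-axis ones. An equivalent route would compute the curvature directly from the Cartesian equation via implicit differentiation, but the parametric approach keeps the algebra cleaner.
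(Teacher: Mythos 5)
Your proof is correct and follows essentially the same route as the paper: the same parametrization $(a\cos t, b\sin t)$ and the same parametric radius-of-curvature formula $\mathcal{R}(t) = \frac{(a^2\sin^2 t + b^2\cos^2 t)^{3/2}}{ab}$. The only difference is the last step: the paper locates the minimizer with a first- and second-derivative test on $\mathcal{R}$, whereas your monotonicity argument (reducing to minimizing $b^2 + (a^2-b^2)\sin^2 t$) is slightly cleaner and yields the global minimum directly.
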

 \begin{proof}
   For the parametric representation of an ellipse $\left(x(s),y(s)\right)=\left(a\cos(s),b\sin(s)\right)$, radius of curvature is
\begin{align}
\mathcal{R}=\frac{(\dot{x}^2+\dot{y}^2)^{\frac{3}{2}}}{\ddot{y}\dot{x}-\ddot{x}\dot{y}}
=\frac{\left(a^2\sin^2(s)+b^2\cos^2(s)\right)^{\frac{3}{2}}}{ab}
\end{align}
The first two derivatives of $\mathcal{R}$ with respect to $s$ are\\
$$\frac{d\mathcal{R}}{ds}=\frac{3(a^2-b^2)\sin(2s)\sqrt{a^2\sin^2(s)+b^2\cos^2(s)}}{2ab},$$
\begin{eqnarray*}
\lefteqn{\frac{d^2\mathcal{R}}{ds^2}=} \\
&&\Big(3(a^2 - b^2) \big(b^2\cos^4(s) - a^2\sin^4(s) + \\
&&2(a^2-b^2)\cos^2(s)\sin^2(s)\big)\Big)\\
&&\Big(ab\sqrt{a^2\sin^2(s) + b^2\cos^2(s)}\Big)^{-1}.
\end{eqnarray*}


From first order necessary conditions for minimization,   $\frac{d\mathcal{R}}{ds}=0$,  the extremizer value  is $s^*=q\pi,\ (2q+1)\frac{\pi}{2}$ for some $q\in \Bbb{Z}^+$. Evaluating $\frac{d^2\mathcal{R}}{ds^2}$ at $s^*$ gives  $\left.\frac{d^2\mathcal{R}}{ds^2}\right\vert_{s^*=q\pi}=\frac{3(a^2 - b^2)}{a}>0$ and $\left.\frac{d^2\mathcal{R}}{ds^2}\right\vert_{s^*=\ \frac{(2q+1)\pi}{2}}=\frac{-3(a^2 - b^2)}{b}<0$ as $a>b>0$. Thus from  second order sufficient conditions of minimization,  $s^*=q\pi$ minimizes $\mathcal{R}$ and the minimum radius of curvature for the ellipse is $\mathcal{R}_{min}=\left.\mathcal{R}\right\vert_{s^*=q\pi} = \frac{b^2}{a}$

\end{proof}

\bibliographystyle{ieeetr}
\bibliography{References}
\end{document}